\newtheorem{definition}{Definition}
\newtheorem{assumption}{Assumption}
\newtheorem{theorem}{Theorem}
\newtheorem{proof}{Proof}
\newtheorem{lemma}{Lemma}
\newcommand\ci{\perp\!\!\!\perp}
\newcommand*{\indep}{%
	\mathbin{%
		\mathpalette{\@indep}{}%
	}%
}
\newcommand*{\nindep}{%
	\mathbin{
		\mathpalette{\@indep}{\not}
	}%
}
\newcommand*{\@indep}[2]{%
	\sbox0{$#1\perp\m@th$}
	\sbox2{$#1=$}
	\sbox4{$#1\vcenter{}$}
	\rlap{\copy0}
	\dimen@=\dimexpr\ht2-\ht4-.2pt\relax
	\kern\dimen@
	{#2}%
	\kern\dimen@
	\copy0 
}
\title{Instrumental Variable Estimation for Causal Inference in Longitudinal Data with Time-Dependent Latent Confounders}
\author{ 	
	Debo~Cheng\equalcontrib, Ziqi~Xu\equalcontrib, Jiuyong~Li, Lin~Liu, Jixue~Liu, Wentao Gao and~Thuc~Duy~Le}
\begin{document}

\maketitle

\begin{abstract}
	Causal inference from longitudinal observational data is a challenging problem due to the difficulty in correctly identifying the time-dependent confounders, especially in the presence of latent time-dependent confounders. Instrumental variable (IV) is a powerful tool for addressing the latent confounders issue, but the traditional IV technique cannot deal with latent time-dependent confounders in longitudinal studies. In this work, we propose a novel Time-dependent Instrumental Factor Model (TIFM) for time-varying causal effect estimation from data with latent time-dependent confounders. At each time-step, the proposed TIFM method employs the Recurrent Neural Network (RNN) architecture to infer latent IV, and then uses the inferred latent IV factor for addressing the confounding bias caused by the latent time-dependent confounders. We provide a theoretical analysis for the proposed TIFM method regarding causal effect estimation in longitudinal data. Extensive evaluation with synthetic datasets demonstrates the effectiveness of TIFM in addressing causal effect estimation over time. We further apply TIFM to a climate dataset to showcase the potential of the proposed method in tackling real-world problems.
\end{abstract}

\section{Introduction}
Causal effect estimation based on observational data plays a crucial role in understanding the underlying causal mechanism of a system in various areas, e.g., in epidemiology, econometrics, clinical decision-making, and climate research~\cite{spirtes2000causation, imbens2015causal, runge2019inferring}. The confounding bias, causal by confounders that affect both the treatment variable and outcome variable, is a challenging problem for obtaining reliable causal effects using observational data, especially in the presence of latent confounders~\cite{hernan2006instruments}. 

Instrumental variable (IV) is a well-known tool for mitigating the spurious associations caused by latent confounders~\cite{martens2006instrumental}. The IV approach strongly relies on a predefined IV, which is required to be independent of all latent confounders and have a causal effect on the outcome only through its direct effect on the treatment.  Some IV-based methods have been developed to address latent confounding in causal effect estimation using observational data in the static setting~\cite{bowden1990instrumental, hartford2017deep,wang2022estimating}. However, only a few IV-based methods are available for the longitudinal setting~\cite{martinussen2017instrumental}. Estimating causal effects over time provides the benefits of analysing changes, interventions, actions, and relationships within the same subjects across different time-steps in many fields, e.g., education, medicine and clinical research. It also provides insights for making effective decisions over time~\cite{bica2019estimating}. 

The primary challenge of estimating causal effects from longitudinal studies is the time-dependent confounders which influence both the time-dependent treatment and the time-dependent outcome of interest, especially those latent time-dependent confounders~\cite{cui2023instrumental, michael2023instrumental}. For example, consider a study that investigates the effect of an `education intervention' (the treatment), such as personalised tutoring, on `academic performance' (the outcome) over time. In this scenario, certain factors like students' pre-existing knowledge levels and study habits, which change over time, might not be directly measured during the study, and are latent time-dependent confounders.

When all time-dependent confounders are measured, adjusting for these confounders using methods like inverse probability of treatment, can be utilised to obtain unbiased estimation of the causal effect of the time-dependent treatment~\cite{ali2016methodological}. In cases when there are latent time-dependent confounders, an IV approach with a valid time-dependent IV can be employed to recover the unbiased causal effect of time-dependent treatment from longitudinal data~\cite{robins2000marginalbook}. The treatment, outcome and IV can be time-varying due to the dynamic and evolving nature~\cite{robins1986new}. Following the example mentioned above, the 'government funding in education' can potentially serve as a time-dependent IV for estimating the causal effect of time-dependent education intervention on time-dependent academic performance. A valid time-dependent IV is crucial for the soundness of the IV approach in longitudinal data, but finding a valid time-dependent IV poses a major challenge in applications.

Data-driven methods for discovering time-dependent IV directly from longitudinal data are needed. Many works using the IV approach in longitudinal studies
need a given valid time-dependent IV~\cite{cui2023instrumental}). However, in practice, it is often impossible to know such an IV in advance, which makes it impossible to estimate the causal effect in this case. On the other hand, although there might not be a variable satisfying the requirements for a time-dependent IV directly, the information of time-dependent IV could be inferred from some observed variables.

In this work, we propose a novel sequential architecture based on a recurrent neural network (RNN), called the Time-dependent Instrumental Factor Model (TIFM) for learning a substitute of a time-dependent IV from the historical data of the covariates in longitudinal data with time-dependent latent confounders. Then the learned substitute in each sequence is used as a time-dependent IV in the Two-Stage Least Squares (TSLS) regression~\cite{angrist1995two} to mitigate the confounding bias caused by latent time-dependent confounders and obtain unbiased sequential causal effect estimations.   

We summarise the contributions of our paper as follows.
\begin{itemize}
	\item We study a crucial problem in a longitudinal study and propose to use a substitute of latent time-dependent IV for estimating causal effect in the presence of latent time-dependent confounders. We theoretically analyse the soundness of the substitute of latent time-dependent IV in causal effect estimation from longitudinal data with latent time-dependent confounders.
	\item  We propose a novel Time-dependent Instrumental Factor Model, TIFM, to learn the substitute of latent time-dependent IV from data. To the best of our knowledge, this is the first method which learns a substitute of time-dependent IV from longitudinal data directly.  
	\item Extensive experiments on synthetic datasets and a real-world climate dataset demonstrate the effectiveness of TIFM in causal effect estimation from longitudinal data with latent time-dependent confounders. 
\end{itemize}

\section{Problem Setting}
Throughout the paper, we use uppercase letters to indicate variables and lowercase letters for their values. A bold-faced letter is used to represent a set of variables (in uppercase) or the corresponding values of a set of variables (in lowercase).

For an individual $i$, the data consists of time-dependent covariates $\bar{\mathbf{X}}_{t}^{(i)}=(\mathbf{X}_{1}, \dots, \mathbf{X}_{t})\in\mathcal{X}_{t}$, history treatment $\bar{W}_{t}^{(i)}=(W_1, \dots, W_t)\in\mathcal{W}_t$ and outcome $\bar{Y}_{t+1}^{(i)}=(Y_2, \dots, Y_{t+1})\in\mathcal{Y}_{t+1}$ for $t$ discrete time-steps. Furthermore, $\mathbf{X}_{t} = [\mathbf{X}_{t1}, \dots, \mathbf{X}_{tk}]\in\mathcal{X}_{t}$ be the $k$ covariates. Note that $\bar{Y}_{t+1}$ is a sink node in the full-time graph\footnote{The infinite directed acyclic graph (DAG) over time~\cite{peters2017elements,robins2000marginalbook}.}~\cite{mastakouri2021necessary}, i.e., there is not a descendant node of $\bar{Y}_{t+1}^{(i)}$. The number of time-steps $t$ ranges within $\{1, \ldots, T\}$ and is not random. In addition to the measured data, let $\bar{\mathbf{U}}_{t}^{(i)}=(U_1, \dots, U_t)\in\mathcal{U}_t$ be an unmeasured time-dependent variables, which affect  both $\bar{W}_{t}^{(i)}$ and $\bar{Y}_{t+1}^{(i)}$. Note that the superscript ($i$) for the specific individual will not be used unless explicitly mentioned. 

Let $Y_{t+1}(\bar{w})$ be the \emph{potential outcomes} relative to each possible value of treatment $\bar{w}$. These \emph{potential outcomes} are not generally measured, and their correlation with the measured data is based on the \emph{consistency} assumption.

\begin{assumption}[Consistency]
	\label{ass:consis}
	For a given individual, if $\bar{W}_{\ge t}=\bar{w}_{\ge t}$, then $Y(\bar{w}_{\ge t}) = Y$, i.e., the potential outcomes on $\bar{w}_{\ge t}$ is the same as the factual outcome $Y$. 
\end{assumption}

We aim to recover the average causal effects over time in the presence of latent time-dependent confounders $\bar{\mathbf{U}}_t$ between $\bar{W}_t$ and $\bar{Y}_{t+1}$, where $\bar{\mathbf{U}}_t$ affects both $\bar{W}_t$ and $\bar{Y}_{t+1}$. The latent Sequential Randomisation Assumption (SRA)~\cite{robins2000marginalbook} holds if $\bar{\mathbf{U}}_t$ is measured:
\begin{assumption}[Latent SRA]
	\label{ass:LSRA}
	$Y(\bar{w})\indep W_t\mid \bar{W}_{t-1}=\bar{w}_{t-1},\bar{\mathbf{X}}_t, \bar{\mathbf{U}}_t$.
\end{assumption}

Note that the latent confounders $\bar{\mathbf{U}}_t$ affects both $\bar{W}_t$ and $\bar{Y}_{t+1}$, resulting in non-identifiable causal effects of $\bar{W}_t$ on $\bar{Y}_{t+1}$~\cite{pearl2009causality,hernan2020causal}. However, traditional IV methods~\cite{hartford2017deep, cheng2023causal} for the static setting cannot be used to obtain unbiased estimation of the causal effects of $\bar{W}_t$ on $\bar{Y}_{t+1}$ over time. In this work, we aim to utilise the ``time-dependent IV'' to address the effect of latent time-dependent confounders $\bar{\mathbf{U}}_t$ in causal effect estimation over time.

We assume that there exists a latent time-dependent IV $\bar{S}_t$ caused by a set of time-dependent covariates $\bar{\mathbf{X}}_t$, satisfying the following longitudinal generalisation of the standard IV assumptions~\cite{robins1991correcting,cui2023instrumental} as described below, for $1\leq t\leq T$, and $\bar{w}_t\in\mathcal{W}^T$: 
\begin{assumption}[IV relevance]
	\label{ass:ivre}
	$\mathbb{E}(W_t\mid \bar{W}_{t-1}, \bar{\mathbf{X}}_t, \bar{S}_t)\neq\mathbb{E}(W_t\mid \bar{W}_{t-1}, \bar{\mathbf{X}}_t, \bar{S}_{t-1})$.
\end{assumption}
The assumption says that the IV $\bar{S}_t$ should be associated with $\bar{W}_t$ conditioning on the history data. 

\begin{assumption}[IV-outcome independence]
	\label{ass:ivout}
	$\bar{S}_t\indep(Y_{t+1} (\bar{w}), \mathbf{X}_{t+1}, \mathbf{U}_{t+1})\mid \bar{W}_t =\bar{w}_t, \bar{\mathbf{X}}_t,  \bar{\mathbf{U}}_t$.
\end{assumption}
This assumption essentially indicates that there is not a direct causal effect of $\bar{S}_t$ on $\mathbf{X}_{t+1}$, and $\mathbf{U}_{t+1}$ and $Y_{t+1}$ will be identified conditioning on history data if one set $\bar{W}_t=\bar{w}_t$~\cite{cui2023instrumental}.

\begin{assumption}[IV–unmeasured confounder independence]
	\label{ass:ivuncon}
	$S_t\indep\bar{\mathbf{U}}\mid \bar{W}_{t-1}, \bar{\mathbf{X}}_t,  \bar{S}_{t-1}$.
\end{assumption}

\begin{figure}[t]
	\centering
	\includegraphics[scale=0.439]{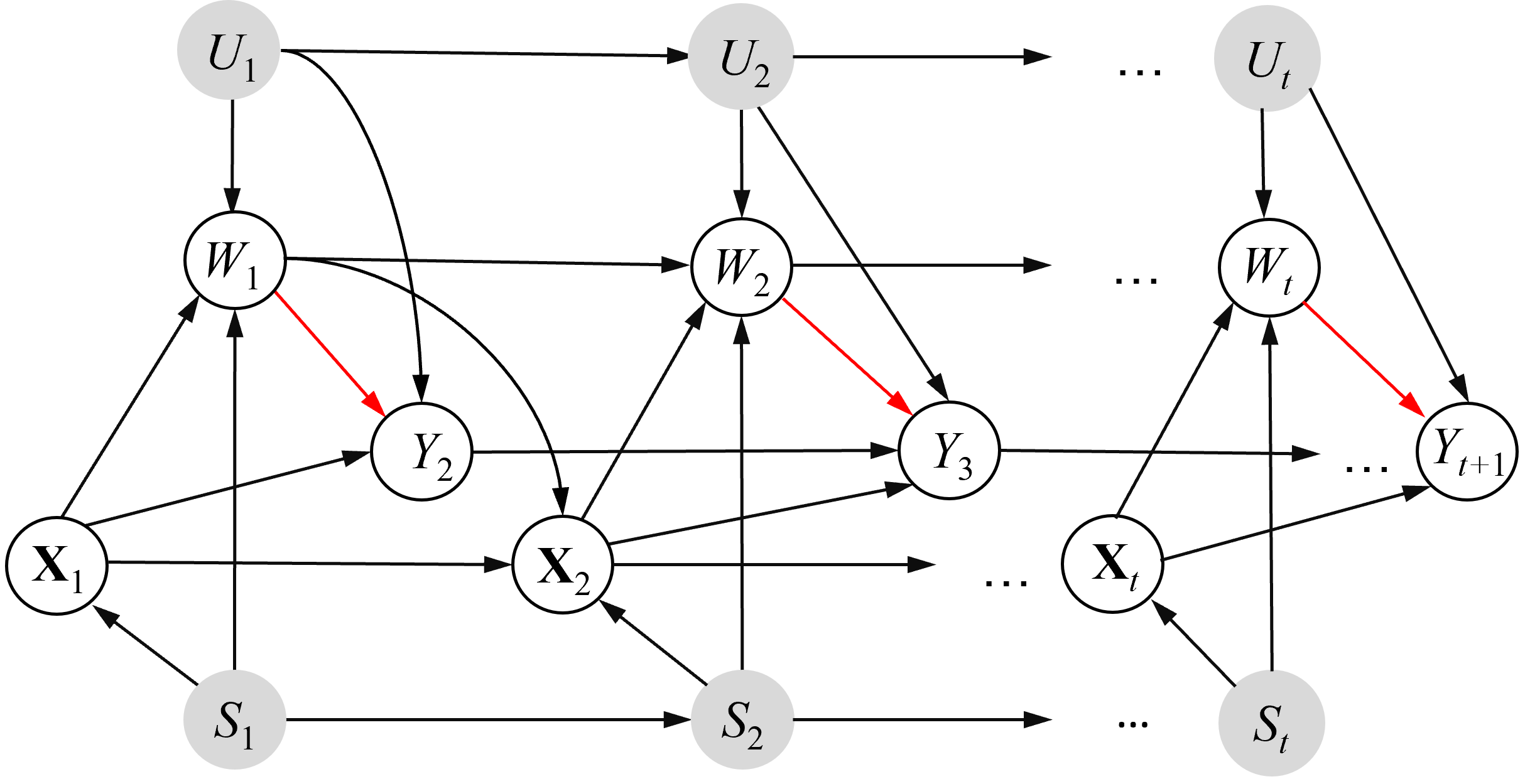}
	\caption{A causal DAG illustrating our problem setting and time-dependent IV, where we have latent time-dependent confounders $\bar{U}_t\in\bar{\mathbf{U}}_t$ (indicated by shaded circles), and the time-dependent confounders $\bar{\mathbf{X}}_t$, and a latent time-dependent IV $\bar{S}_t$ (indicated by shaded circles), satisfying Assumptions \ref{ass:ivre}, \ref{ass:ivout} and \ref{ass:ivuncon}. We aim to estimate the causal effects of $\bar{W}_t$ on $\bar{Y}_{t+1}$ over time by inferring and using the substitute of $\bar{S}_t$ from $\bar{\mathbf{X}}_t$ and $\bar{W}_t$ for estimating the causal effects of  $\bar{W}_t$ on $\bar{Y}_{t+1}$ over time in the presence of latent time-dependent confounders $\bar{\mathbf{U}}_t$.}
	\label{fig:ivdag}
\end{figure}

Assumption~\ref{ass:ivuncon} says that $S_t$ and $\bar{\mathbf{U}}$ are independent conditioning on history data and ${\mathbf{X}}_t$.  These three assumptions are utilised to formalise the temporal relationships within the data. Fig.~\ref{fig:ivdag} presents an illustrative causal DAG, offering a clear interpretation of the intricate relationships among  $\bar{S}_t$, $\bar{\mathbf{X}}_t$, $\bar{\mathbf{U}}_t$, $\bar{W}_t$ and $\bar{Y}_{t+1}$.

It is worth mentioning that both assumptions \ref{ass:ivout} and \ref{ass:ivuncon} are not testable from longitudinal data in the presence of $\bar{\mathbf{U}}_t$. Moreover, most IV-based methods in longitudinal studies require a known time-dependent IV, which may not be available in numerous real-world applications. The challenge of identifying or searching for a valid time-dependent IV remains an unresolved problem in causal inference within a temporal context. To offer an alternative approach to this open problem in causal inference with longitudinal data, in this paper,  we focus on exploring how to learn time-dependent IV with minimal reliance on domain knowledge.

\section{The Proposed TIFM Method}
In this work, we aim to learn a substitute of the latent time-dependent IV $\bar{S}_t$ from the history data for estimating the causal effects of $\bar{W}_t$ on $\bar{Y}_{t+1}$ in the presence of latent time-dependent confounders. Then, we apply the inferred substitute of the latent time-dependent IV within the Two-Stage Least Squares regression (TSLS) to estimate the average causal effects over time in the presence of latent time-dependent confounders.  

\subsection{Objective Formulation}
Causal inference requires certain assumptions to draw valid causal conclusions from observational data, particularly when latent time-dependent confounders are present. In this work, we focus on the assumed setting shown in Fig.~\ref{fig:ivdag} to infer the time-dependent IV for estimating the causal effects of $\bar{W}_t$ on $\bar{Y}_{t+1}$ over time in the presence of latent time-dependent confounders $\bar{\mathbf{U}}_t$. 

Specifically, our objective is to infer a sequence of latent factors $\bar{\mathbf{L}}_t =(L_1, \dots, L_t)\in \mathcal{L}_t$ as a substitute of $\bar{S}_t$ from history data using deep learning techniques. Then, the inferred substitute $\bar{\mathbf{L}}_t$ serves as a time-dependent IV for causal inference from longitudinal data with latent time-dependent confounders $\bar{\mathbf{U}}_t$. 

Recently, Bica et al.~\cite{bica2020time} have built a factor model \emph{over time} to infer latent variables render the assigned multiple treatments conditionally independent in the estimation of treatment responses over time. Different from the work~\cite{bica2020time}, we consider a single treatment $W_t$, rather than multiple treatments at time $t$. Note that a single treatment is a much more difficult problem setting than the multiple treatments setting, in the sense that single treatment provides less information about the latent time-dependent confounders $\bar{\mathbf{U}}_t$. Similar to the work~\cite{wang2019blessings,bica2020time}, we also make the no latent single-cause confounders assumption: there are no single-cause confounders between $\bar{\mathbf{X}}_t$ and $\bar{W}_t$, i.e., no variable that affects just one of $\bar{\mathbf{X}}_t$ and $\bar{W}_t$ over time.

\subsection{Solution Outline}
Let $\bar{\mathbf{H}}_{t-1}=(\bar{W}_{t-1},\bar{\mathbf{X}}_{t-1}, \bar{\mathbf{L}}_{t-1})$. At time $t$, TIFM constructs the latent variable $\mathbf{l}_t=f(\bar{\mathbf{h}}_{t-1})$, where $\bar{\mathbf{h}}_{t-1}=(\bar{w}_{t-1}, \bar{\mathbf{x}}_{t-1}, \bar{\mathbf{l}}_{t-1})$ denotes the values of history data $\bar{\mathbf{H}}_{t-1}$. In our problem setting, $\bar{S}_t$ in the causal DAG in Fig.~\ref{fig:ivdag} is unobserved. The latent variable $\bar{\mathbf{L}}_t$ renders the marginal distribution of $\mathbf{X}_t$ as:
\begin{equation}
	\label{eq:001}
	p(x_{t1}, \dots, x_{tk}\mid \mathbf{l}_t) = \prod_{j=1}^{k} p(x_{tj}\mid\mathbf{l}_t)
\end{equation} 

To obtain the substitute $\bar{\mathbf{L}}_t$, the factor model of $\bar{\mathbf{X}}_t$ with joint distribution is built on $\bar{\mathbf{X}}_t$ and history data  $\bar{\mathbf{H}}_{t-1}$ as a latent variable model:
\begin{equation}
	\begin{aligned}
		p(\sigma_{1:k},\bar{\mathbf{x}}_T,\bar{\mathbf{l}}_T) &=  p(\sigma_{1:k})\times \\&  \prod_{t=1}^{T} (p(\mathbf{l}_t\mid \bar{\mathbf{h}}_{t-1}) \prod_{j=1}^{k} p({x}_{tj}\mid\mathbf{l}_t,\sigma_j))
	\end{aligned}
	\label{eq:factorM}
\end{equation}

\noindent where $\sigma_{1:k}$ are parameters. We fit Eq. (\ref{eq:factorM}) to capture the dependencies among the covariates caused by the latent time-dependent IV $\bar{S}_t$. We then infer $\bar{\mathbf{L}}_t$, which can be regarded as a substitute for $\bar{S}_t$. Thus, by leveraging the dependencies between multiple covariate measurements, the factor model enables us to recover a sequence of latent variables $\bar{\mathbf{L}}_t$ from history data. 

To provide a theoretical proof for Eq. (\ref{eq:factorM}), we first introduce the following Sequential Kallenberg Construction, a modified definition of the ``Kallenburg Construction'' in~\cite{kallenberg1997foundations,bica2020time}.

\begin{definition} [Sequential Kallenberg construction]
	At time-step $t$, the distribution of $\mathbf{X}_t = [\mathbf{X}_{t1}, \dots, \mathbf{X}_{tk}]$ follows a sequential Kallenberg construction through the random variables $\mathbf{l}_t = f(\bar{\mathbf{h}}_{t-1})$, provided that there exist measurable functions $f_{tj}: \mathcal{L} \times [0, 1] \to \mathcal{X}_t$ and random variables $M_{tj} \in [0, 1]$, where $j = 1, \dots, k$, satisfying the condition $\mathbf{X}_{tj} = f_{tj}(\mathbf{L}_t, M_{tj})$, with $M_{tj} \in [0, 1]$ jointly satisfying $(M_{t1}, \dots, M_{tk})\ci \bar{W}_t\mid \bar{\mathbf{X}}_{t-1},\bar{\mathbf{L}}_t$.
\end{definition}

We then present the following theorem, which guarantees that  $\bar{\mathbf{L}}_t$  can serve as a valid time-dependent instrument for unbiased causal effect estimation of $\bar{W}_t$ on $\bar{Y}_{t+1}$ using longitudinal data, even in the presence of latent time-dependent confounders $\bar{\mathbf{U}}_t$. 
\begin{theorem}
	\label{theo:01}
	If the distribution $p(\bar{\mathbf{x}}_T)$ can be represented using the factor model $p(\sigma_{1:k}, \bar{\mathbf{x}}_T, \bar{\mathbf{l}}_T)$, we can deduce that $\bar{S}_t$ is captured by the substitute $\bar{\mathbf{L}}_t$ which serves as a time-dependent IV.
\end{theorem}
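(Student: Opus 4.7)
The plan is to show that the inferred sequence of latent factors $\bar{\mathbf{L}}_t$ inherits the three defining IV conditions (Assumptions~\ref{ass:ivre}, \ref{ass:ivout}, \ref{ass:ivuncon}) from the true but unobserved $\bar{S}_t$, provided $p(\bar{\mathbf{x}}_T)$ admits the factor representation in Eq.~(\ref{eq:factorM}). First I would unpack the factor model to establish that, conditional on $\bar{\mathbf{L}}_t$ and history $\bar{\mathbf{H}}_{t-1}$, the coordinates $\mathbf{X}_{t1},\dots,\mathbf{X}_{tk}$ are mutually independent, since Eq.~(\ref{eq:001}) and Eq.~(\ref{eq:factorM}) jointly state that $\bar{\mathbf{L}}_t$ renders the per-time marginal of $\mathbf{X}_t$ into a product form. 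This plays the role of a ``de~Finetti / Kallenberg'' sufficiency statement at each time-step: any remaining dependence between the $\mathbf{X}_{tj}$'s would have to be carried by a variable outside $\bar{\mathbf{L}}_t$.

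Next I would invoke the Sequential Kallenberg construction together with the no single-cause confounder assumption to argue that $\bar{\mathbf{L}}_t$ captures the informational content of $\bar{S}_t$. Concretely, any common latent parent $\bar{S}_t$ of multiple coordinates of $\bar{\mathbf{X}}_t$ induces dependence across the $\mathbf{X}_{tj}$'s; since that cross-coordinate dependence is by hypothesis fully explained by conditioning on $\bar{\mathbf{L}}_t$, the shared information of $\bar{S}_t$ with $\bar{\mathbf{X}}_t$ must factor through $\bar{\mathbf{L}}_t$. Writing $\mathbf{X}_{tj}=f_{tj}(\mathbf{L}_t,M_{tj})$ with $(M_{t1},\dots,M_{tk})\indep \bar{W}_t\mid \bar{\mathbf{X}}_{t-1},\bar{\mathbf{L}}_t$, any information about $\bar{W}_t$ carried by $\bar{S}_t$ through $\bar{\mathbf{X}}_t$ is already measurable with respect to $\bar{\mathbf{L}}_t$, so $\bar{\mathbf{L}}_t$ is a valid surrogate in the sense $\bar{S}_t\indep (\cdot)\mid \bar{\mathbf{L}}_t,\bar{\mathbf{H}}_{t-1}$ for the relevant ``$\cdot$''.

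With this surrogacy in hand, I would verify the three IV properties in turn for $\bar{\mathbf{L}}_t$. For relevance (Assumption~\ref{ass:ivre}), I would note that since $\bar{S}_t\to\bar{\mathbf{X}}_t\to\bar{W}_t$ is a directed path in Fig.~\ref{fig:ivdag} and $\bar{\mathbf{L}}_t$ absorbs the $\bar{S}_t$-signal in $\bar{\mathbf{X}}_t$, the conditional mean $\mathbb{E}(W_t\mid\bar{W}_{t-1},\bar{\mathbf{X}}_t,\bar{\mathbf{L}}_t)$ genuinely depends on $\bar{\mathbf{L}}_t$. For IV--outcome independence (Assumption~\ref{ass:ivout}) and IV--confounder independence (Assumption~\ref{ass:ivuncon}), I would use d-separation on the causal DAG: because $\bar{\mathbf{L}}_t$ is a deterministic function of history $\bar{\mathbf{H}}_{t-1}$ and the factor model renders $\bar{\mathbf{L}}_t$ exchangeable with $\bar{S}_t$ with respect to cross-coordinate dependencies, conditioning on $(\bar{W}_{t-1},\bar{\mathbf{X}}_t,\bar{\mathbf{U}}_t)$ or $(\bar{W}_{t-1},\bar{\mathbf{X}}_t,\bar{\mathbf{L}}_{t-1})$ blocks every path from $\bar{\mathbf{L}}_t$ to $(Y_{t+1}(\bar{w}),\mathbf{X}_{t+1},\mathbf{U}_{t+1})$ and to $\bar{\mathbf{U}}$ respectively, exactly mirroring the argument one would apply to $\bar{S}_t$.

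The main obstacle I anticipate is the surrogacy step, namely rigorously arguing that the latent factor $\bar{\mathbf{L}}_t$ recovered purely from cross-coordinate dependencies of $\bar{\mathbf{X}}_t$ is informationally rich enough to substitute for $\bar{S}_t$ in conditional independence statements involving $\bar{W}_t$, $\bar{\mathbf{U}}$, and $Y_{t+1}$. This is where the no single-cause confounder assumption, Assumption~\ref{ass:consis}, and the Sequential Kallenberg construction must be combined carefully; any coordinate that depends on $\bar{S}_t$ in an idiosyncratic, non-shared way would leak IV-relevant information that $\bar{\mathbf{L}}_t$ could fail to capture, so the proof must exclude that possibility using precisely these assumptions before the three IV conditions can be transferred from $\bar{S}_t$ to $\bar{\mathbf{L}}_t$.
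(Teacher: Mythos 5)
Your proposal follows essentially the same route as the paper's own proof: the paper proves Theorem~\ref{theo:01} via two lemmas showing (i) that the factor model representation yields a Sequential Kallenberg construction at each time-step and (ii) that this construction gives $(\mathbf{X}_{t1},\dots,\mathbf{X}_{tk})\ci \bar{W}_t\mid \bar{\mathbf{X}}_{t-1},\bar{\mathbf{L}}_t$, from which it concludes that $\bar{S}_t$ is captured by $\bar{\mathbf{L}}_t$ and then transfers Assumptions~\ref{ass:ivre}--\ref{ass:ivuncon} to $\bar{\mathbf{L}}_t$. The ``surrogacy'' obstacle you flag at the end is exactly the step the paper itself handles only by assertion (``Hence, $\bar{S}_t$ must be captured by the substitute $\bar{\mathbf{L}}_t$''), so your outline is, if anything, more candid about where the argument is thinnest.
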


Theorem~\ref{theo:01} guarantees the soundness of our proposed TIFM method.  $\bar{\mathbf{L}}_t$ is a substitute of $\bar{S}_t$ and plays the role of time-dependent IV in the estimation of the causal effect of $\bar{W}_t$ on $\bar{Y}_{t+1}$ from longitudinal data in the presence of latent time-dependent confounders $\bar{\mathbf{U}}_t$. In the next section, we introduce our implementation of TIFM method over time in practice. 

\subsection{Implementation}
In this section, we present an implementation for acquiring the substitute of time-dependent IV. 
Long Short-Term Memory (LSTM) is a specialised type of recurrent neural network (RNN) that is well-suited for tasks involving sequences and time-series data. 
Hence, we devise a LSTM architecture primed to capture the substitute of latent time-dependent IV from history data. Formally, the component dedicated to substitute IVs is outlined as follows:
\begin{equation}
	\begin{aligned}
		&\mathbf{L}_{1} = \text{LSTM}(\psi),\\
		&\mathbf{L}_{t} = \text{LSTM}(\mathbf{L}_{t-1},\mathbf{X}_{t-1}),
	\end{aligned}
\end{equation} 
\noindent where $\psi$ is the randomly initialised parameter for the initial step, which is subsequently trained in conjunction with the remaining parameters within the LSTM.

After training the LSTM model, we extract $\bar{\mathbf{L}}_t$ from the model and utilise it as the time-dependent IV for an IV-based causal effect estimation. It is essential to emphasise that our TIFM method imposes no restrictions on subsequent causal effect estimators, thereby allowing for the incorporation of well-known estimators like TSLS~\cite{angrist1995two}, DeepIV~\cite{hartford2017deep}, and Ortho.IV~\cite{syrgkanis2019machine}, as plug-ins within the overarching framework. The architecture of the proposed TIFM method is visually depicted in Fig.~\ref{pic:model}.

\begin{figure}[t]
	\centering
	\includegraphics[scale=0.35]{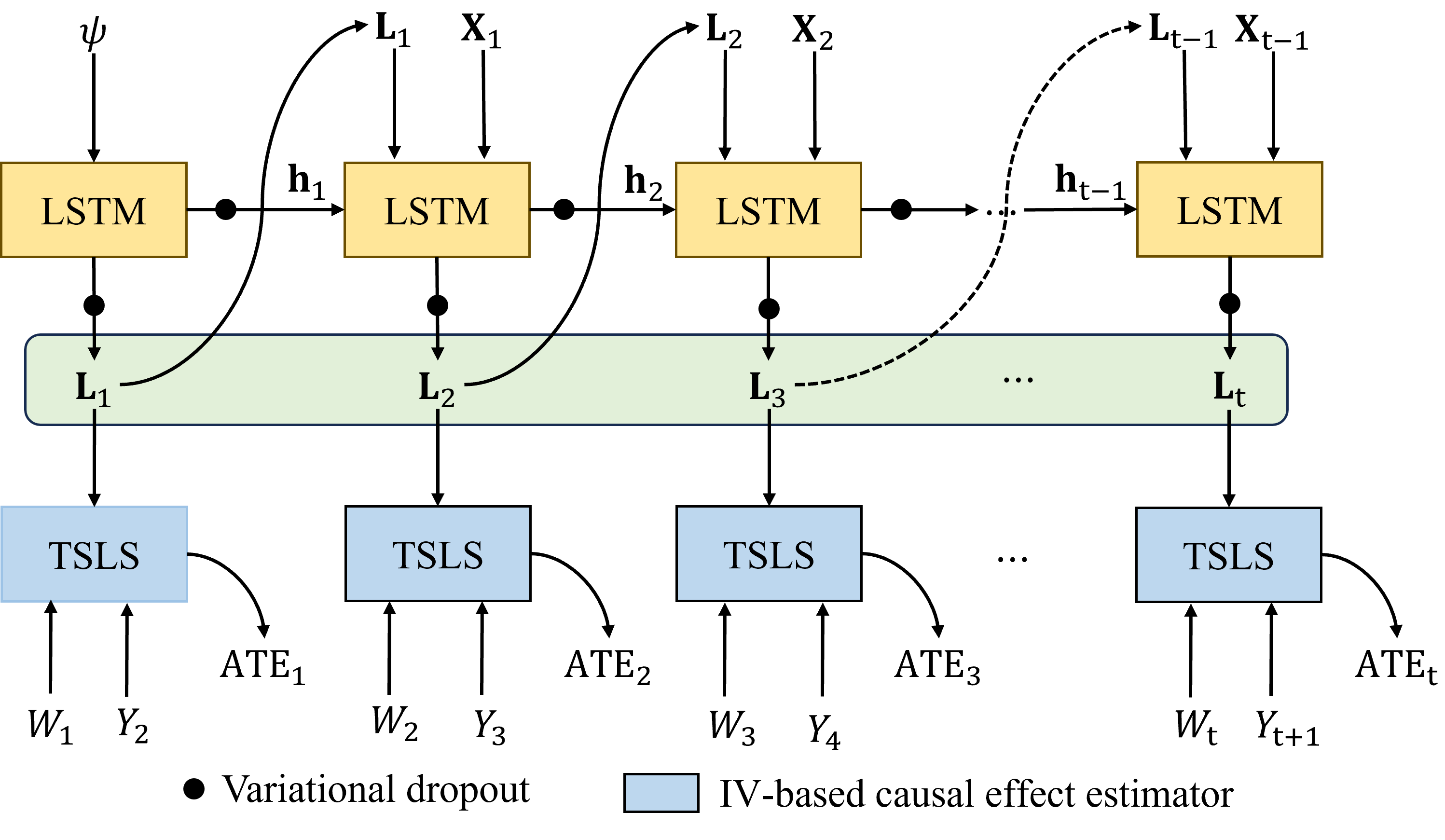}
	\caption{An overview on the architecture of the TIFM method. $\mathbf{L}_{t}$ is generated by LSTM as a function of the history state $\mathbf{h}_{t}$ and current input. The TSLS estimators in the blue rectangular can be replaced by any possible IV-based causal effect estimator.}
	\label{pic:model}
\end{figure}

\section{Experiments}
To validate the performance of the proposed TIFM method, we first conduct experiments on synthetic datasets, where we have access to the true time-dependent IVs, enabling precise computation of the actual causal effects over time. Following the methodologies outlined in~\cite{wang2019blessings, bica2020time}, we generate synthetic datasets to assess the performance of our TIFM. Comparative evaluations are drawn against the state-of-the-art estimators. Furthermore, sensitivity analyses on TIFM are carried out to glean insights into its resilience across diverse parameter configurations. To demonstrate the practical applicability of our TIFM, we further apply TIFM to a real-world climate dataset as a case study, verifying its effectiveness in real-world scenarios. 

\subsection{Experiment Setup}
To ensure fair comparisons, we perform simulation studies on several synthetic datasets. These datasets are generated by adhering to the data generation process outlined in the study by Bica et al.~\cite{bica2020time}. The specifics of our synthetic dataset generation procedure are available in the supplement due to space constraints. In this work, we generate the synthetic datasets with a variety range of sample sizes:  2k, 4k, 6k, and 8k. To avoid the bias brought by the data generation process, we repeatedly generate 30 datasets for each sample size. To induce time dependencies, we set $p = 1$ and $p = 3$ ($p$ is from $p$-order autoregressive processes, the detail can be found in the supplements.). The dimensionality of the covariates $\mathbf{X}$ and the latent confounders are set to 3, respectively. 

\begin{figure*}[t]
	\centering
	\includegraphics[scale=0.566]{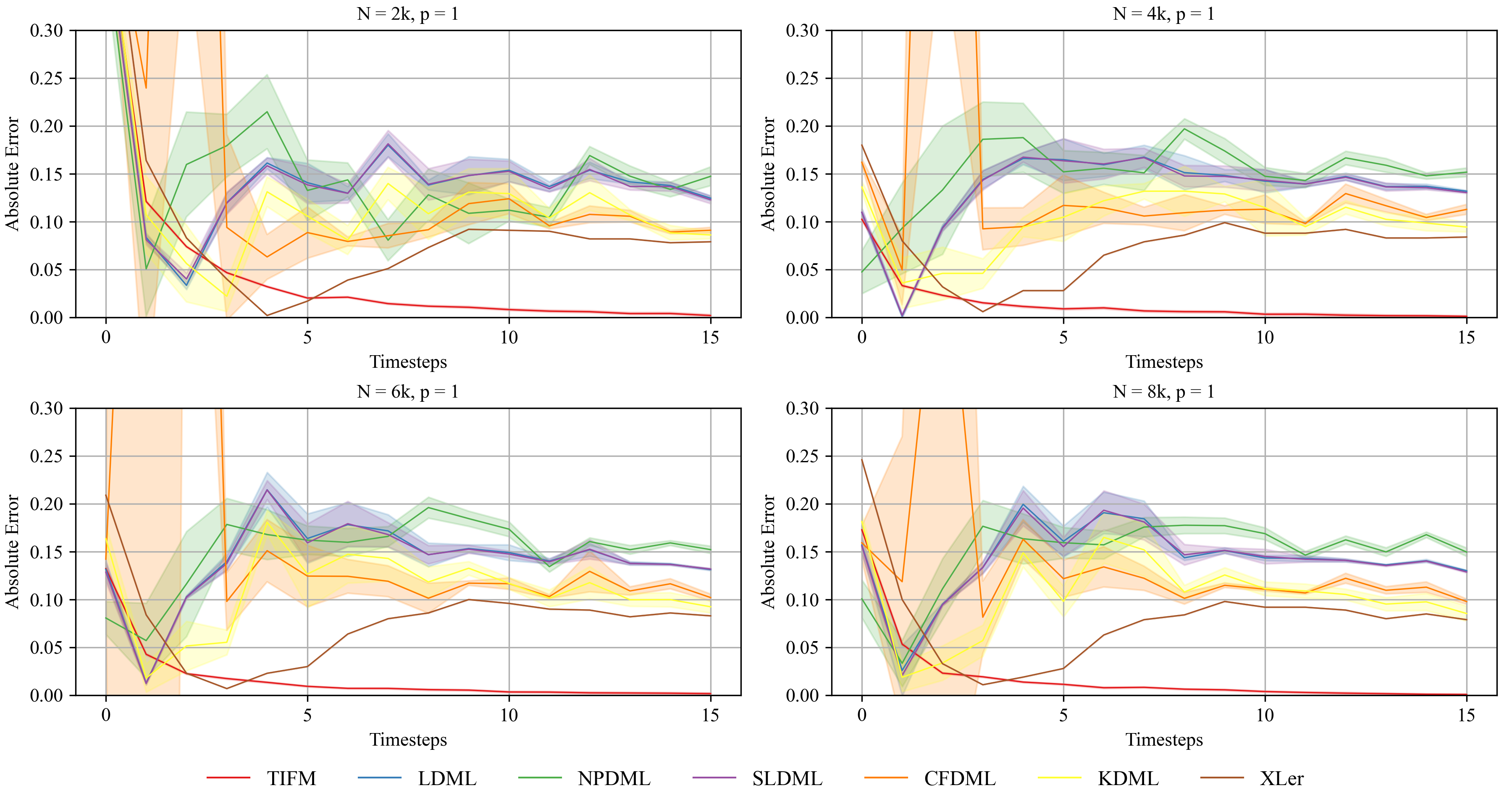}
	\caption{Results of the absolute errors across all estimators, accompanied by the mean plus standard deviation calculated over 30 synthetic datasets. Notably, our TIFM consistently shows a reduction in absolute error as the time-step increases. }
	\label{fig:res_simulate}
\end{figure*}

\subsubsection{Models for Comparison} We compare TIFM with multiple state-of-the-art causal effect estimators,  including: (1) LinearDML (LDML)~\cite{chernozhukov2018double}, which addresses the reverse causal metric bias by applying the cross-fitting strategy; (2) NonParamDML (NPDML)~\cite{chernozhukov2018double}, which is a non-parametric version of Double ML estimators that can have arbitrary final ML models; (3) SparseLinearDML (SLDML)~\cite{Semenova2023Inference}, for which loss function of the LinearDML estimator is modified by incorporating $L_1$ regularisation; (4) CausalForestDML (CFDML)~\cite{athey2019generalized}, which employs two random forests for causal estimations for predicting two potential outcomes respectively; (5) KernelDML (KDML)~\cite{nie2021quasi}, which combines dimensionality reduction techniques and kernel methods; (6) Mete-learner~\cite{kunzel2019metalearners}, specifically, the X-learner (XLer). The aforementioned machine learning-based estimators operate within a static setting and result in biased estimations when applied to longitudinal data, as they disregard the time dependencies between covariates.

\noindent\textbf{Notes}. IV-based estimators working in a static setting are not compared in our experiments since these methods assume a known IV explicitly included in the dataset, but in our problem setting, the IV is not measured. Some well-known double-robust models are used for comparison, but the experimental results show that these models produce a huge deviation in the longitudinal data, so we will not report them in the main text and provide the results in the supplement. We also note that there are some models that estimate causal effects over time, including Standard Marginal Structural Models~\cite{robins2000marginal,hernan2001marginal}, Recurrent Marginal Structural Networks~\cite{lim2018forecasting} and Time Series Deconfounder~\cite{bica2020time}. However, these models are completely different from the tasks we focus on, so they cannot be compared  as discussed in our related work. They produce predictions of potential outcomes, but our TIFM method focuses on the estimation of causal effects for each time-step. 

\subsubsection{Evaluation Criterion} For evaluating the performance of
TIFM and the comparison models, we use the absolute error $|\hat{\beta} - \beta|$ as the metric, where $\hat{\beta}$ is the estimated results and $\beta$ is the ground truth.

\subsubsection{Implementation Details} The implementations of comparison models are from the $Python$ package $econml$~\cite{econml}. We use TensorFlow~\cite{tensorflow2015} to implement our proposed TIFM method. We provide the implementation of our TIFM method and the parameter settings in the supplement.

\begin{table*}[t]
	\centering
	\caption{Experimental results for sensitivity analysis on the synthetic datasets with 5k.}
	\begin{tabular}{ccccccc}
		\toprule
		\multicolumn{2}{c}{ }          & time-step-1          & time-step-5        & time-step-10       & time-step-15       & time-step-20       \\ \midrule
		\multirow{2}{*}{$p$ = 1} & Baseline & 0.227$\pm$0.000 & 0.283$\pm$0.000 & 0.251$\pm$0.000 & 0.231$\pm$0.000 & 0.209$\pm$0.000 \\
		& TIFM     & 0.173$\pm$0.001   & 0.013$\pm$0.001 & 0.005$\pm$0.001 & 0.001$\pm$0.001 & 0.005$\pm$0.001 \\ \midrule
		\multirow{2}{*}{$p$ = 3} & Baseline & 0.280$\pm$0.000   & 0.323$\pm$0.000 & 0.335$\pm$0.000 & 0.311$\pm$0.000 & 0.272$\pm$0.000 \\
		& TIFM     & 0.287$\pm$0.002   & 0.023$\pm$0.001 & 0.013$\pm$0.000 & 0.031$\pm$0.001 & 0.058$\pm$0.002 \\ \bottomrule
	\end{tabular}
	\label{tab:compare}
\end{table*}

\subsection{Performance Evaluation}
The absolute errors across all estimators on 30 synthetic datasets are visualised in Fig.~\ref{fig:res_simulate}.
From Fig.~\ref{fig:res_simulate}, we have that the proposed TIFM method achieves the lowest absolute error compared with other methods. Although the comparison methods can adjust for covariates to enable unbiased causal inference in a static setting, they still struggle to handle latent confounders and time-dependent onfounders, resulting in some large estimation errors.

Our proposed TIFM method learns a suitable substitute for time-dependent IV from history data, followed by using an IV-based estimator to estimate causal effects at each time-step. By doing so, our TIFM effectively addresses bias arising from time dependence, while the IV estimator employed by TIFM tackles bias from time-dependent confounders and latent confounders. As a result, our TIFM method consistently outperforms the others, especially beyond the fifth time-step.

Please note that additional comprehensive results for various sample sizes and settings of $p = 3$ are provided in the supplement. The success of our proposed TIFM method in achieving lower absolute errors demonstrates its superior performance in handling complex confounding factors and time dependencies. 

\subsection{Sensitivity Analysis}
In this section, we conduct a sensitivity analysis on the parameter $p$ (time dependencies), exploring its impact on the performance of our proposed TIFM. Due to page limitations, we maintain a fixed sample size of 5k while varying the value of $p$ to compare the baseline\footnote{The $\hat{\beta}$ for each time step is calculated without any adjustment.} with TIFM. For detailed results, please refer to the supplement.

Table~\ref{tab:compare} presents the comparative results under different time-steps, and we observe no significant changes in the outcomes. Specifically, the baseline model exhibits a considerable absolute error when compared to TIFM. Notably, as we manipulate the value of $p$, we notice that increasing $p$ leads to a decline in performance. This trend is consistent with our expectations, as higher values of $p$ imply increased complexity of time dependence, resulting in deteriorated estimation performance for static models. TIFM is also affected to some extent by changes in $p$. As $p$ increases, we observe a slight increase in the absolute error; however, overall, our TIFM continues to demonstrate superior performance compared to the baseline.

\subsection{Case Study: NCEP-NCAR Reanalysis 1 Dataset}
The dataset used in this study is sourced from the National Centers for Environmental Prediction (NCEP) and the National Center for Atmospheric Research (NCAR). It is a comprehensive global climate dataset widely employed in atmospheric research~\cite{kalnay1996ncep}. The dataset encompasses a diverse array of variables, such as precipitation rate (prate), pressure level (pres), air temperature (air), skin temperature (skt), downward short-wave radiation flux (dswrf), clear-sky upward solar flux (csusf), clear-sky downward longwave flux (csdlf), cloud forcing net longwave flux (cfnlf), wind speed (wspd), minimum temperature (tmin), and seasonal categories (season).

For the analysis conducted in this research, we focus on specific regions, namely the United States (403 data points) and Europe (418 data points). Subsequently, we select cfnlf, wspd, and skt as the treatments under investigation, aiming to estimate their respective causal effects on prate (precipitation rate) at each time-step. The total time-steps are set to 120, encompassing monthly intervals from 2013 to 2022.

We apply our TIFM method to the two different regions and present the results for the United States in Fig.~\ref{pic:usa}. The analysis reveals that skin temperature  (skt) has the most significant impact on rainfall in the United States. This finding aligns with domain experts' knowledge, as the diverse climates in the United States make surface temperature a crucial driver of rainfall, especially in areas where high temperatures can trigger strong convective activity~\cite{seeleyromps2020}.

\begin{figure}[t]
	\centering
	\includegraphics[scale=0.66]{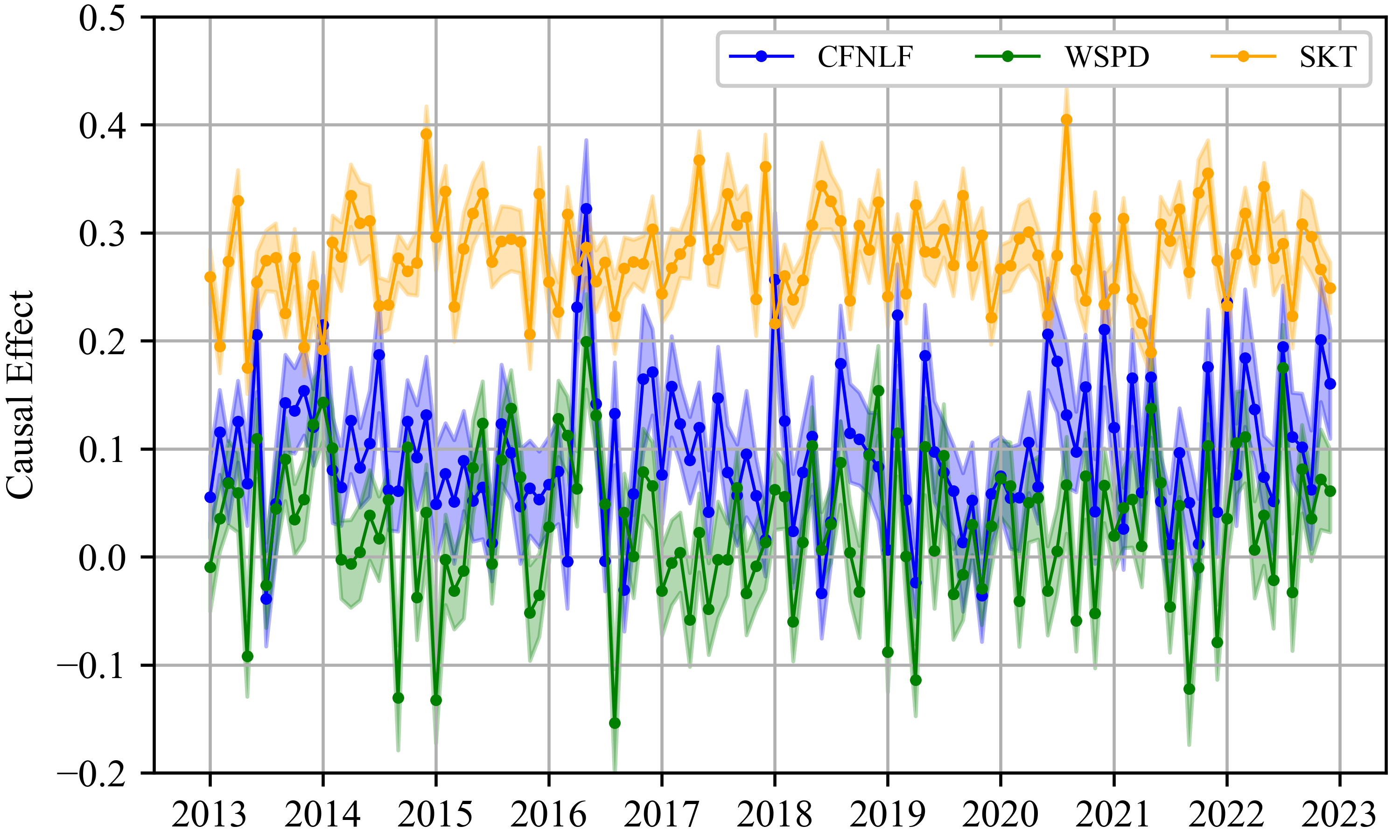}
	\caption{The estimated causal effects over time by our proposed TIFM method on the data from United States.}
	\label{pic:usa}
\end{figure}

\begin{figure}[t]
	\centering
	\includegraphics[scale=0.66]{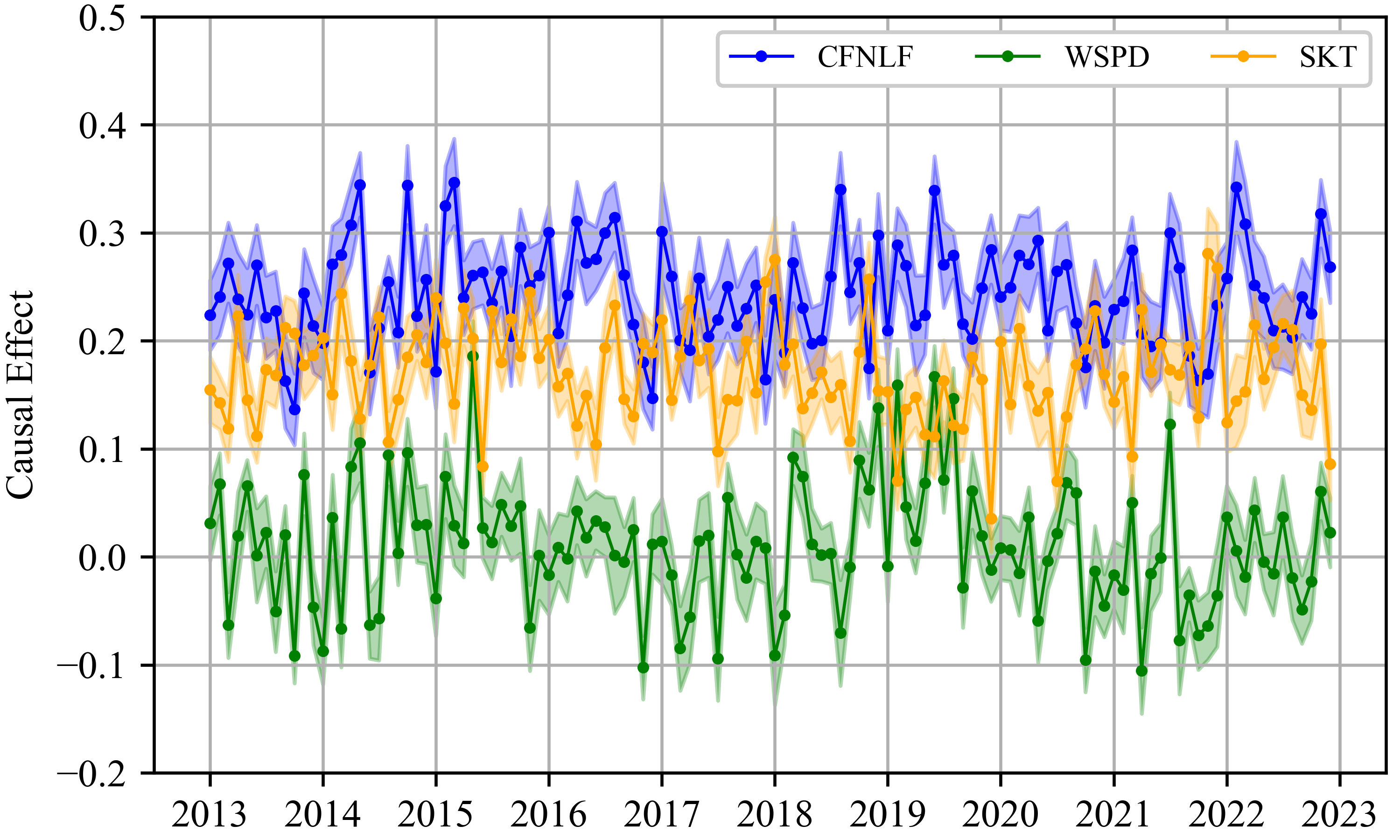}
	\caption{The estimated causal effects over time by our proposed TIFM method on the data from Europe.}
	\label{pic:europe}
\end{figure}

Fig.~\ref{pic:europe} displays the results for Europe. In this region, wind speed does not directly cause rainfall; rather, it influences cloud formation and distribution by transporting water vapour. When these clouds contain sufficiently large water droplets, precipitation occurs, leading to rainfall~\cite{haylock2008european, uppala2005era, zolina2013changes}. The result in Fig.~\ref{pic:europe} is consistent with the domain knowledge and the result indicates that cloud forcing net longwave flux (cfnlf) has the most significant causal effect on precipitation (prate) in Europe.

Overall, the impact of these three causal factors on rainfall varies across different regions, and their exact magnitude of influence may be influenced by other latent factors, such as topography and vegetation cover, which are challenging to directly measure. However, our proposed TIFM method effectively captures the interactions between these latent factors and historical records. Despite the assumption that historical information serves as a proxy for these latent factors, our experimental results show the applicability of the TIFM method in real-world problems.

\section{Related Work}
\label{sec:relatedworks}
Most prior IV-based methods have primarily concentrated on causal effect estimation from observational data within a static setting~\cite{imbens2014instrumental,athey2019generalized,cheng2022data}. There have been relatively fewer methods dealing with longitudinal data. In this study, we delve into methods for estimating causal effects over time while considering the influence of latent time-dependent confounders, by employing time-dependent IV. In this section, we review the work closely relative to our TIFM method, including IV-based methods in static settings, treatment effect estimations over time and time-dependent IV-based methods for causal effect estimation.   

\noindent \textbf{ IV-based methods in static setting}. Instrumental variables are a powerful approach for mitigating the confounding bias arising from latent confounders in causal effect estimation \cite{hernan2006instruments,imbens2014instrumental}. For example, Hartford et al. \cite{hartford2017deep} introduced an innovative IV method called DeepIV, which utilises deep ensembles to estimate causal effects in nonlinear scenarios. Athey et al. \cite{athey2019generalized} devised an IV method based on random forest regression. Cheng et al. \cite{cheng2023causal} introduced a novel method which utilises a deep generative model to construct and disentangle the representations of conditional IV and its corresponding conditioning set using observational data. In contrast to these IV-based methods designed for a static setting, our focus centres on the acquisition of a valid time-dependent IV using longitudinal data for the estimation of causal effects over time, particularly in the presence of latent time-dependent confounders.

\noindent \textbf{Treatment effect estimations over time}.  The realm of causal effect estimation over time has seen prominent contributions from the epidemiology community, encompassing techniques such as $g$-computation, structural nested mean models (SNMM), and marginal structural mean models (MSMM) \cite{robins1986new,robins1997causal,robins2000marginal,robins2000marginalbook}. These methods commonly resort to logistic or linear regression models for prediction, rendering them ill-suited for addressing intricate time-dependent relationships. Recently, Lim et al. \cite{lim2018forecasting} introduced a novel concept, the recurrent marginal structural networks, to predict the evolution of treatment responses over time. Similarly, Bica et al. \cite{bica2019estimating} presented a novel solution: the counterfactual recurrent network (CRN), a sequence-to-sequence model. CRN constructs a treatment-invariant representation that empowers the prediction of counterfactual scenarios. Moreover, Bica et al.~\cite{bica2020time} proposed a time series deconfounder (TSD) employing a recurrent neural network (RNN) architecture for multiple treatments causal effect estimation, further expanding the CRN for analysing the complexities of time series data.

Recently,  Melnychuk et. al.~\cite{melnychuk2022causal}  proposed a novel Causal Transformer model (CT) for estimating counterfactual outcomes from observational data, particularly suited for capturing complex, long-range dependencies among time-varying confounders. Sun et. al.~\cite{sun2023CPT}  proposed  Causal Trajectory Prediction (CTP) model for predicting non-communicable disease progression combines trajectory prediction with causal discovery, enhancing interpretability and aiding clinical decision-making by estimating treatment effect bounds.  Cao et al.~\cite{cao2023estimating} proposed LipCDE, an innovative approach for estimating individualized treatment effects by leveraging Lipschitz regularization and neural controlled differential equations to model dynamic causal relationships and reduce bias in time series data with latent confounders.  Frauen et al.~\cite{frauen2023estimating} developed DeepACE, an end-to-end deep learning model designed for estimating time-varying average causal effects from observational data, addressing latent confounders.  However, these methods are not applicable when there exist latent time-dependent confounders between the single treatment and outcome, as TSD and LipCDE are designed for multiple treatments, while DeepACE and CT assume the absence of latent confounders. Different from these  works, TIFM focuses on the latent time-dependent confounder between the single treatment and outcome and is an IV-based  causal effect estimator. 

\noindent \textbf{IV-based methods for causal effect estimations over time}. 
Recently, there have been some works on developing IV-based methods for analysing time-to-event outcomes in the presence of latent time-dependent confounders. For instance, Martinussen et al.~\cite{martinussen2017instrumental} developed a novel IV estimator via a semiparametric structural cumulative model in the context of time-to-event outcomes.
This seminal contribution takes shape in the form of a structural accelerated failure model tailored for time-event analysis. In a more contemporary stride, Michael et al. \cite{michael2023instrumental} delve into the realm of identifying and estimating MSMMS. They navigate the landscape of time-dependent treatments and introduce the novel concept of time-dependent IV into the mix.  Further expanding the frontiers, Cui et al. \cite{cui2023instrumental} delve into the realm of sufficient conditions for identifying parameters within a marginal structural model using temporal data. Their methodology hinges on the utilisation of time-dependent IV, which is a valuable resource derived from domain knowledge or expertise. Rather than being provided with a pre-existing time-dependent IV, our TIFM aims to learn and generate a substitute of the latent time-dependent IV using history data.

In comparison to the reviewed works, our study focuses on addressing causal effect estimation over time in the presence of latent time-dependent confounders. We achieve this by generating a substitute for the latent time-dependent IV with minimal reliance on domain knowledge. Through our research, we aim to contribute to the advancement of methodologies for addressing the complexities inherent in longitudinal data analysis.

\section{Conclusion}
In this paper, we propose a novel sequential Time-dependent Instrumental Factor Model (TIFM) designed for learning a substitute of the latent time-dependent IV. This substitution enables accurate estimation of causal effects in longitudinal data, particularly when facing time-dependent latent confounders.
We provide theoretical evidence to establish the validity of TIFM in the context of learning the substitute for the latent time-dependent IV from data. Additionally, we devise a LSTM architecture to effectively capture the surrogate of the latent time-dependent IV in practical scenarios. To evaluate the performance of TIFM in causal effect estimation over time while accounting for latent time-dependent confounders, we conduct extensive experiments on synthetic datasets. To further illustrate the applicability of TIFM, we conduct a case study using a real-world climate dataset. The results of this case study underscore the potential of TIFM in real-world applications.

\section{Acknowledgments}
We wish to acknowledge the support from the Australian Research Council (under grant DP230101122).  Thuc Duy Le is supported by DECRA (DE200100200).

\bibliography{tifm2024}

\newpage
\appendix		
\textbf*{Supplementary Material for ``Instrumental Variable Estimation for Causal Inference in Longitudinal Data with Time-Dependent Latent Confounders''}

The supplement is provided to the paper ``Instrumental Variable Estimation for Causal Inference in Longitudinal Data with Time-Dependent Latent Confounders''. We present two lemmas for the proof of Theorem 1 in the main text, , the implementations of compared models, parameters setting,  details of synthetic data generation and more experimental results.

\section{The Proposed TIFM Method}
\subsection{Obejective Formulation}
In our main text, we introduce the Sequential Kallenberg Construction, a modified definition of the ``Kallenburg Construction'' in~\cite{kallenberg1997foundations,bica2020time}.

\begin{definition} [Sequential Kallenberg construction]
	At time-step $t$, the distribution of $\mathbf{X}_t = [\mathbf{X}_{t1}, \dots, \mathbf{X}_{tk}]$ follows a sequential Kallenberg construction through the random variables $\mathbf{l}_t = f(\bar{\mathbf{h}}_{t-1})$, provided that there exist measurable functions $f_{tj}: \mathcal{L} \times [0, 1] \to \mathcal{X}_t$ and random variables $M_{tj} \in [0, 1]$, where $j = 1, \dots, k$, satisfy the condition $\mathbf{X}_{tj} = f_{tj}(\mathbf{L}_t, M_{tj})$, with $M_{tj} \in [0, 1]$ jointly satisfying $(M_{t1}, \dots, M_{tk})\ci \bar{W}_t\mid \bar{\mathbf{X}}_{t-1},\bar{\mathbf{L}}_t$.
\end{definition}

We then present two lemmas that are adapted from the work in~\cite{bica2020time}, which serve to prove our Theorem 1.

\begin{lemma} \label{lemma:001} 
	If at every timestep $t$, the marginal distribution of  $(\mathbf{X}_{t1}, \dots, \mathbf{X}_{tk})$ admits a Kallenberg construction from $\mathbf{l}_t=f(\bar{\mathbf{h}}_{t-1})$, then we obtain $(\mathbf{X}_{t1}, \dots, \mathbf{X}_{tk})\ci \bar{W}_t\mid \bar{\mathbf{X}}_{t-1},\bar{\mathbf{L}}_t$.
\end{lemma}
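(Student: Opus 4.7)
\textbf{Proof plan for Lemma~\ref{lemma:001}.} The plan is to exploit the functional representation delivered by the Sequential Kallenberg construction to reduce the lemma to an elementary closure property of conditional independence under measurable maps. By hypothesis, for each $j=1,\dots,k$ we have $\mathbf{X}_{tj} = f_{tj}(\mathbf{L}_t, M_{tj})$ with $f_{tj}$ measurable, together with the joint conditional independence $(M_{t1}, \dots, M_{tk}) \ci \bar{W}_t \mid \bar{\mathbf{X}}_{t-1}, \bar{\mathbf{L}}_t$. The desired conclusion is then just the image of this statement under the coordinate-wise map $(f_{t1},\dots,f_{tk})$, so the work is to spell out why this image is well-behaved.

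First I would observe that $\mathbf{L}_t$ is a component of the conditioning vector $\bar{\mathbf{L}}_t$, and is therefore measurable with respect to the $\sigma$-algebra generated by $(\bar{\mathbf{X}}_{t-1}, \bar{\mathbf{L}}_t)$. Using the standard fact that a random element measurable with respect to the conditioning $\sigma$-algebra can be freely adjoined on either side of a conditional independence statement, I would upgrade the given assumption to $(\mathbf{L}_t, M_{t1}, \dots, M_{tk}) \ci \bar{W}_t \mid \bar{\mathbf{X}}_{t-1}, \bar{\mathbf{L}}_t$.

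Next I would invoke the closure rule: if $A \ci B \mid C$ and $D$ is a measurable function of $A$ alone, then $D \ci B \mid C$. Taking $A = (\mathbf{L}_t, M_{t1}, \dots, M_{tk})$, $B = \bar{W}_t$, $C = (\bar{\mathbf{X}}_{t-1}, \bar{\mathbf{L}}_t)$, and $D = (f_{t1}(\mathbf{L}_t, M_{t1}), \dots, f_{tk}(\mathbf{L}_t, M_{tk})) = (\mathbf{X}_{t1}, \dots, \mathbf{X}_{tk})$, the desired statement $(\mathbf{X}_{t1}, \dots, \mathbf{X}_{tk}) \ci \bar{W}_t \mid \bar{\mathbf{X}}_{t-1}, \bar{\mathbf{L}}_t$ follows immediately.

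The hard part will be less the probabilistic content and more the bookkeeping that makes explicit why $\mathbf{L}_t$ may be treated as already sitting inside the conditioning set; this is exactly the point of defining the Sequential Kallenberg construction so that $\mathbf{L}_t$ enters each $f_{tj}$ as an input rather than as a free random argument. No induction across time-steps is needed, since the construction is stated pointwise at each $t$ and all relevant history is gathered into $(\bar{\mathbf{X}}_{t-1}, \bar{\mathbf{L}}_t)$; once the two conditional-independence manipulations above are written down carefully, the lemma is established.
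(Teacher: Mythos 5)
Your proposal is correct and follows essentially the same route as the paper's own proof: both adjoin the conditioning variable ($\mathbf{L}_t$, or the full $\bar{\mathbf{L}}_t$ in the paper) to the left-hand side of the given conditional independence $(M_{t1},\dots,M_{tk})\ci\bar{W}_t\mid\bar{\mathbf{X}}_{t-1},\bar{\mathbf{L}}_t$, and then apply closure of conditional independence under measurable functions of the left argument via $\mathbf{X}_{tj}=f_{tj}(\mathbf{L}_t,M_{tj})$. Your write-up is in fact slightly more explicit than the paper's about why the adjunction step is legitimate.
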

\begin{proof}
	Assume that $\mathbf{X}_j$ for $j=1, \dots, k$ are Borel spaces. $\forall t \in \{1, \dots, T\}$ assume $\mathcal{L}_t$ is a measured space, and assume that $\mathbf{X}_{tj} = f_{tj}(\mathbf{L}_t, M_{tj})$.  Hence, we have $(M_{t1}, \dots, M_{tk})\ci \bar{W}_t\mid \bar{\mathbf{X}}_{t-1},\bar{\mathbf{L}}_t$. This implies that $(\bar{\mathbf{L}}_t, M_{t1}, \dots, M_{tk})\ci \bar{W}_t\mid \bar{\mathbf{X}}_{t-1},\bar{\mathbf{L}}_t$. Since $\mathbf{X}_{tj}$'s are measured functions of  $(\mathbf{L}_t, M_{tj})$, and $\bar{\mathbf{H}}_{t-1}=(\bar{W}_{t-1}, \bar{\mathbf{X}}_{t-1}, \bar{\mathbf{L}}_{t-1})$,  we have $(\mathbf{X}_{t1}, \dots, \mathbf{X}_{tk})\ci \bar{W}_t\mid \bar{\mathbf{X}}_{t-1},\bar{\mathbf{L}}_t$, for $\forall t\in\{1, \dots, T\}$. 
\end{proof}

Similar to Lemma 2 in~\cite{bica2020time}, we present the following lemma to ensure that the factor modes for $\bar{\mathbf{X}}_t$ satisfy the Sequential Kallenberg construction at each timestep $t$.

\begin{lemma} 
	\label{lemma:002} 
	Under weak regularity conditions, if the distribution $p(\bar{\mathbf{x}}_T)$  can be represented using the factor model $p(\sigma_{1:k}, \bar{\mathbf{x}}_T, \bar{\mathbf{l}}_T)$, then we obtain a sequential Kallenberg construction at each timestep. 
\end{lemma}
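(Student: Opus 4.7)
The plan is to realize the Sequential Kallenberg Construction by invoking the classical Kallenberg representation (noise outsourcing) theorem at each timestep $t$, and then using the product factorization $\prod_{j} p(x_{tj}\mid\mathbf{l}_t,\sigma_j)$ built into the factor model to promote the resulting componentwise representation into the joint conditional independence required by the definition. The ``weak regularity conditions'' referenced in the statement are precisely that the spaces $\mathcal{X}_{tj}$ and $\mathcal{L}$ are standard Borel, which is the hypothesis under which Kallenberg's transfer theorem~\cite{kallenberg1997foundations} applies.

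First I would fix a timestep $t$ and the parameters $\sigma_{1:k}$. The factor model asserts that the conditional law of $X_{tj}$ depends on history only through $\mathbf{L}_t$ and $\sigma_j$, and that $X_{t1},\dots,X_{tk}$ are mutually conditionally independent given $\mathbf{L}_t$. Applying Kallenberg's theorem coordinatewise, and absorbing the fixed $\sigma_j$ into the function, produces measurable maps $f_{tj}:\mathcal{L}\times[0,1]\to\mathcal{X}_{tj}$ and Uniform$[0,1]$ variables $M_{tj}$ satisfying $X_{tj}=f_{tj}(\mathbf{L}_t,M_{tj})$ with $M_{tj}\ci\mathbf{L}_t$. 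The conditional independence of the $X_{tj}$'s given $\mathbf{L}_t$ then lets me realize $(M_{t1},\dots,M_{tk})$ as mutually independent given $\mathbf{L}_t$ via a standard product-kernel outsourcing argument on an enlarged Borel probability space, which is exactly the setting in which Kallenberg's construction is flexible enough to accommodate auxiliary uniforms.

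The main technical obstacle is to strengthen this componentwise construction to the joint statement $(M_{t1},\dots,M_{tk})\ci\bar{W}_t\mid\bar{\mathbf{X}}_{t-1},\bar{\mathbf{L}}_t$. The key observation is that the factor kernel $p(x_{tj}\mid\mathbf{l}_t,\sigma_j)$ carries no dependence on $\bar{W}_t$ at all, so the noise $M_{tj}$ produced by Kallenberg may be coupled to be exogenous to the treatment process. Concretely, I would construct the $M_{tj}$'s on an enlargement of the original probability space to be jointly independent of $(\bar{W}_t,\bar{\mathbf{X}}_{t-1},\bar{\mathbf{L}}_t,\sigma_{1:k})$, and then verify by direct integration against $p(\sigma_{1:k},\bar{\mathbf{x}}_T,\bar{\mathbf{l}}_T)$ that the law of $(X_{tj})_{j=1}^{k}$ recovered from $f_{tj}(\mathbf{L}_t,M_{tj})$ matches the factor-model marginal. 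Since $\bar{\mathbf{L}}_t$ is measurable with respect to $\bar{\mathbf{h}}_{t-1}$, additionally conditioning on $(\bar{\mathbf{X}}_{t-1},\bar{\mathbf{L}}_t)$ cannot reintroduce any dependence of the $M_{tj}$'s on $W_t$, delivering the required conditional independence. Iterating this construction forward through $t=1,\dots,T$, with each step using only the already-fixed $(\mathbf{L}_s)_{s\le t}$, yields the sequential Kallenberg construction at every timestep and completes the argument.
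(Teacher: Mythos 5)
Your proposal is correct and follows essentially the same route as the paper's proof: both invoke Kallenberg's noise-outsourcing representation coordinatewise on Borel spaces (reducible to $[0,1]^k$), exploit the product structure $\prod_{j} p(x_{tj}\mid\mathbf{l}_t,\sigma_j)$ of the factor model, and obtain the joint conditional independence $(M_{t1},\dots,M_{tk})\ci\bar{W}_t\mid\bar{\mathbf{X}}_{t-1},\bar{\mathbf{L}}_t$ from the fact that the noise variables are measurable functions of the point-mass parameters $\sigma_{1:k}$ and fresh uniforms exogenous to the treatment process. Your write-up is somewhat more explicit about enlarging the probability space and verifying that the reconstructed law matches the factor-model marginal, but the substance matches the paper's argument.
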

\begin{proof}
	First, without loss of generality,   the product space of each domain  $\mathcal{L}$ is assumed to be a Borel space, which can be reduced to $\mathcal{L}=[0,1]^k$~\cite{kallenberg1997foundations}.  There exists some measurable function $h_{ij}$ such that ${M}_{ij} = h_{ij}(\sigma_i, \omega_{tj})$, where $\omega_{tj}\sim Uniform[0, 1]$ and $j =1, \dots, k$.  Similar to Lemma 2 in~\cite{bica2020time}, $\sigma_{1:k}$ are parameters in the factor model and can be considered point masses, so we have $(\sigma_1,  \dots, \sigma_{k})\ci \bar{W}_t\mid \bar{\mathbf{X}}_{t-1},\bar{\mathbf{L}}_t$. Because ${M}_{ij} = h_{ij}(\sigma_i, \omega_{tj})$ are measureable function of $\sigma_i$ and $\omega_{tj}$, we have $(M_{t1}, \dots, M_{tk})\ci \bar{W}_t\mid \bar{\mathbf{X}}_{t-1},\bar{\mathbf{L}}_t$. 
	
	Therefore, we have a sequential Kallenberg construction at timestep $t$. 
\end{proof}

\noindent\textbf{Theorem 1}
	\emph{If the distribution $p(\bar{\mathbf{x}}_T)$ can be represented using the factor model $p(\sigma_{1:k}, \bar{\mathbf{x}}_T, \bar{\mathbf{l}}_T)$, we can deduce that $\bar{S}_t$ is captured by the substitute $\bar{\mathbf{L}}_t$ which serves as a time-dependent IV.}
\begin{proof}
	First, we proof that $\bar{\mathbf{L}}_t$ is a substitute of $\bar{S}_t$. Based on Lemma~\ref{lemma:001}, if at every time-step $t$, the distribution of $\mathbf{X}_t = (\mathbf{X}_{t1}, \dots, \mathbf{X}_{tk})$ admits the Kallenberg construction through $\bar{\mathbf{L}}_t = f(\bar{\mathbf{H}}_{t-1})$ satisfying $(M_{t1}, \dots, M_{tk})\ci \bar{W}_t\mid \bar{\mathbf{X}}_{t-1},\bar{\mathbf{L}}_t$, then we have  $(\mathbf{X}_{t1}, \dots, \mathbf{X}_{tk})\ci \bar{W}_t\mid \bar{\mathbf{X}}_{t-1},\bar{\mathbf{L}}_t$ (More details see the supplement.). Furthermore, according to Lemma~\ref{lemma:002},  if the distribution $p(\bar{\mathbf{x}}_T)$ can be represented using the factor model $p(\sigma_{1:k}, \bar{\mathbf{x}}_T, \bar{\mathbf{l}}_T)$, then we have a sequential Kallenberg construction for each time-step. Hence, $\bar{S}_t$ must be captured by the substitute $\bar{\mathbf{L}}_t$.
	
	Next, $\bar{\mathbf{L}}_t$ is a substitute of the latent time-dependent IV $\bar{S}_t$, so at the time-step $t$, we have that (i). $\mathbb{E}(W_t\mid \bar{W}_{t-1}, \bar{\mathbf{X}}_t, \bar{\mathbf{L}}_t)\neq\mathbb{E}(W_t\mid \bar{W}_{t-1}, \bar{\mathbf{X}}_t, \bar{\mathbf{L}}_{t-1})$ (Assumptions~3 holds); (ii). $\bar{\mathbf{L}}_t\indep(Y_{t+1} (\bar{w}), \mathbf{X}_{t+1}, \mathbf{U}_{t+1})\mid \bar{W}_t =\bar{w}_t, \bar{\mathbf{X}}_t, \bar{\mathbf{U}}_t$ (Assumptions~4 holds); and (iii). $\bar{\mathbf{L}}_t\indep\bar{\mathbf{U}}\mid \bar{W}_{t-1}, \bar{\mathbf{X}}_t, \bar{\mathbf{L}}_{t-1}$ (Assumptions~5 holds). Therefore, the substitute $\bar{\mathbf{L}}_t$ serves as a time-dependent IV.
\end{proof}

\section{Experiments}
\subsection{Experiment Setup}
\paragraph{Data Generation} To ensure that the synthetic datasets closely resemble real-world scenarios, we adopt the same procedure as outlined in~\cite{bica2020time}, wherein $p$-order autoregressive processes are used to generate the datasets. At each time-step $t$, we simulate the time-dependent covariates $\mathbf{X}_{t}$, latent time-dependent confounders $\mathbf{U}_{t}$, and a latent time-dependent  IV $S_{t}$ as follows:
\begin{equation*}
	\begin{aligned}
		&\mathbf{X}_{t} = \frac{1}{p} \sum_{i = 1}^{p} (\alpha_i\mathbf{X}_{t-i} + \omega_i{W}_{t-i}) + \varepsilon_{\mathbf{X}},\\
		&\mathbf{U}_{t} = \frac{1}{p} \sum_{i = 1}^{p} (\beta_i\mathbf{U}_{t-i} + \lambda_i{W}_{t-i}) + \varepsilon_{\mathbf{U}},\\
		&{S}_{t} = \frac{1}{p} \sum_{i = 1}^{p} (\gamma_i{IV}_{t-i} + \delta_i\mathbf{X}_{t-i}) + \varepsilon_{S},
	\end{aligned}
\end{equation*}where $\alpha_i, \lambda_i, \delta_i \sim \mathcal{N}(0,0.5^{2})$, $\omega_i, \beta_i, \gamma_i \sim \mathcal{N}(1-(i/p),(i/p)^{2})$, and $\varepsilon_{\mathbf{X}}, \varepsilon_{\mathbf{U}}, \varepsilon_{S} \sim \mathcal{N}(0,0.01^{2})$. The treatment $W_t$ depends on latent IV $S_t$, latent confounders $\mathbf{U}_t$ and covariates $\mathbf{X}_t$:
\begin{equation*}
	\begin{aligned}
		&\theta_t = \mu_{\mathbf{X}}\widehat{\mathbf{X}_t} + \mu_{\mathbf{U}}\widehat{\mathbf{U}_t} + \mu_{{S}}\widehat{{S}_t},\\
		&W_t | \theta_t \sim Bernoulli(\sigma(c~\theta_t)),
	\end{aligned}
\end{equation*}
\noindent where $\widehat{\mathbf{X}_t}, \widehat{\mathbf{U}_t}, \widehat{{S}_t}$ are the sum of the covariates, latent confounders and latent IV respectively over the last $p$ time steps, $\sigma(\cdot)$ is the sigmoid function, $\mu_{\mathbf{X}_t}, \mu_{\mathbf{U}_t}, \mu_{{IV}_t}, c \sim \mathcal{N}(0,1^{2})$. The outcome $Y_{t+1}$ is obtained as a function of treatment $W_t$, covariates $\mathbf{X}_t$ and latent confounders $\mathbf{U}_t$:
\begin{equation*}
	\begin{aligned}
		Y_{t + 1} = \rho_{{W}}W_t + \rho_{\mathbf{X}}\mathbf{X}_{t} + \rho_{\mathbf{U}}\mathbf{U}_{t}, 
	\end{aligned}
\end{equation*}where $\rho_{{W}}, \rho_{\mathbf{X}},\rho_{\mathbf{U}} = 0.5$. 

In this paper, we generate the synthetic datasets with a variety range of sample sizes (N), i.e.,  2k, 4k, 6k, and 8k. To avoid the bias brought by the data generation process, we repeatedly generated 30 datasets for each sample size. To induce time dependencies, we set $p = 1$ and $p = 3$. The dimensionality of the covariates $\mathbf{X}$ and the latent confounders are set to 3, respectively. 

\paragraph{Parameters Settings} The implementations of all comparisons are from the \emph{Python} package~\cite{econml} with default .   For our TIFM,   the parameter settings of TIFM  are summarised in Table~\ref{tab:setting}.

\begin{table}[t]
	\centering
	\caption{Details of the parameter settings used in TIFM.}
	\label{tab:setting}
	\begin{tabular}{|cc|cc|}
		\toprule
		Parameter & Value & Parameter & Value  \\ \midrule
		Reps & 30 & RNN hidden units & 128  \\
		Epoch & 100 & FC hidden units & 128 \\
		Batch\_Size & 128 & Dropout probability & 0.8  \\ \bottomrule
	\end{tabular}
\end{table}

\begin{figure*}[t]
	\centering
	\includegraphics[scale=0.59]{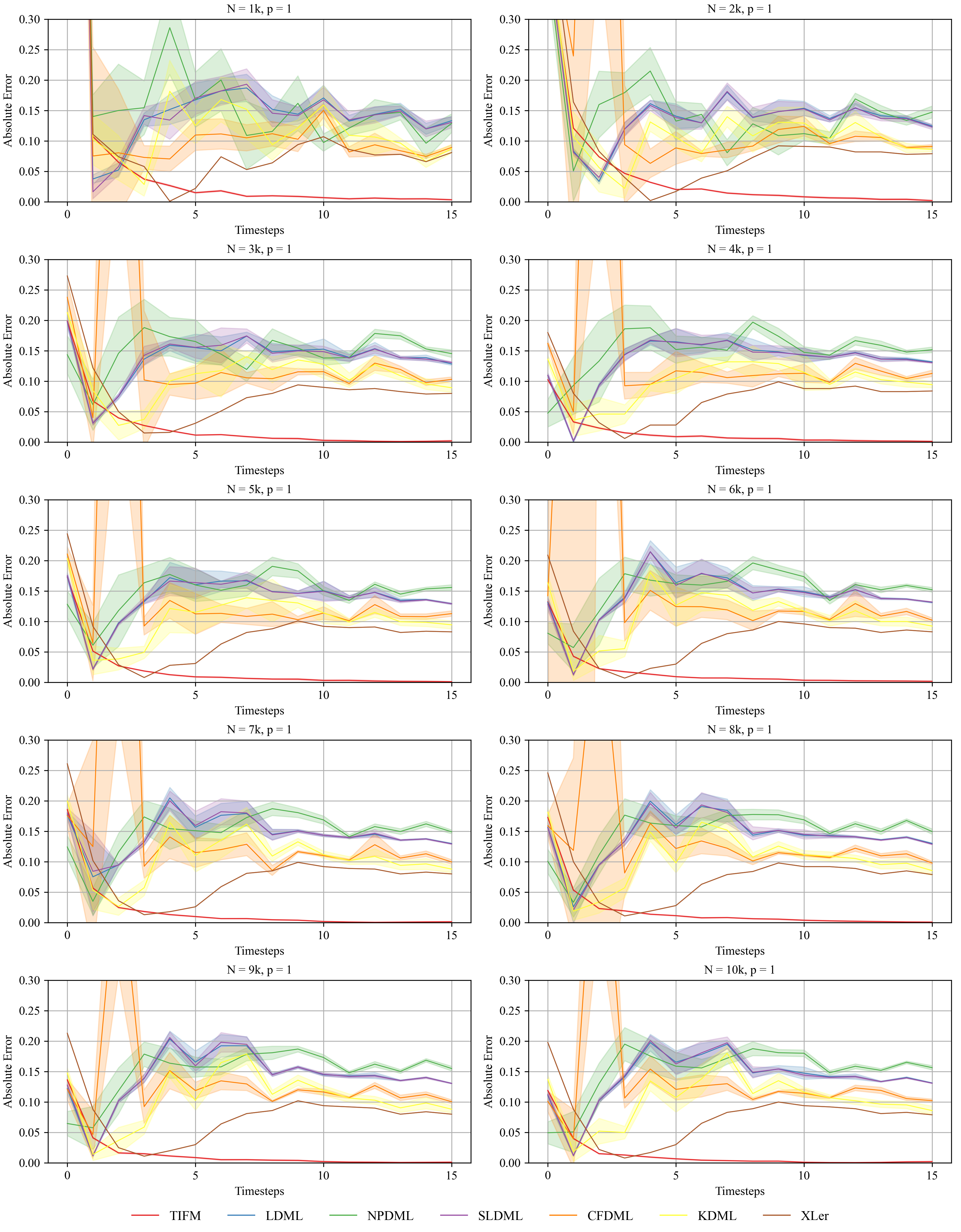}
	\caption{Results of the absolute errors across all estimators, accompanied by the mean plus standard deviation calculated over 30 synthetic datasets. $p$ = 1, where $p$ is from $p$-order autoregressive processes.} \label{pic:p1} 
\end{figure*}

\begin{figure*}[t]
	\centering
	\includegraphics[scale=0.59]{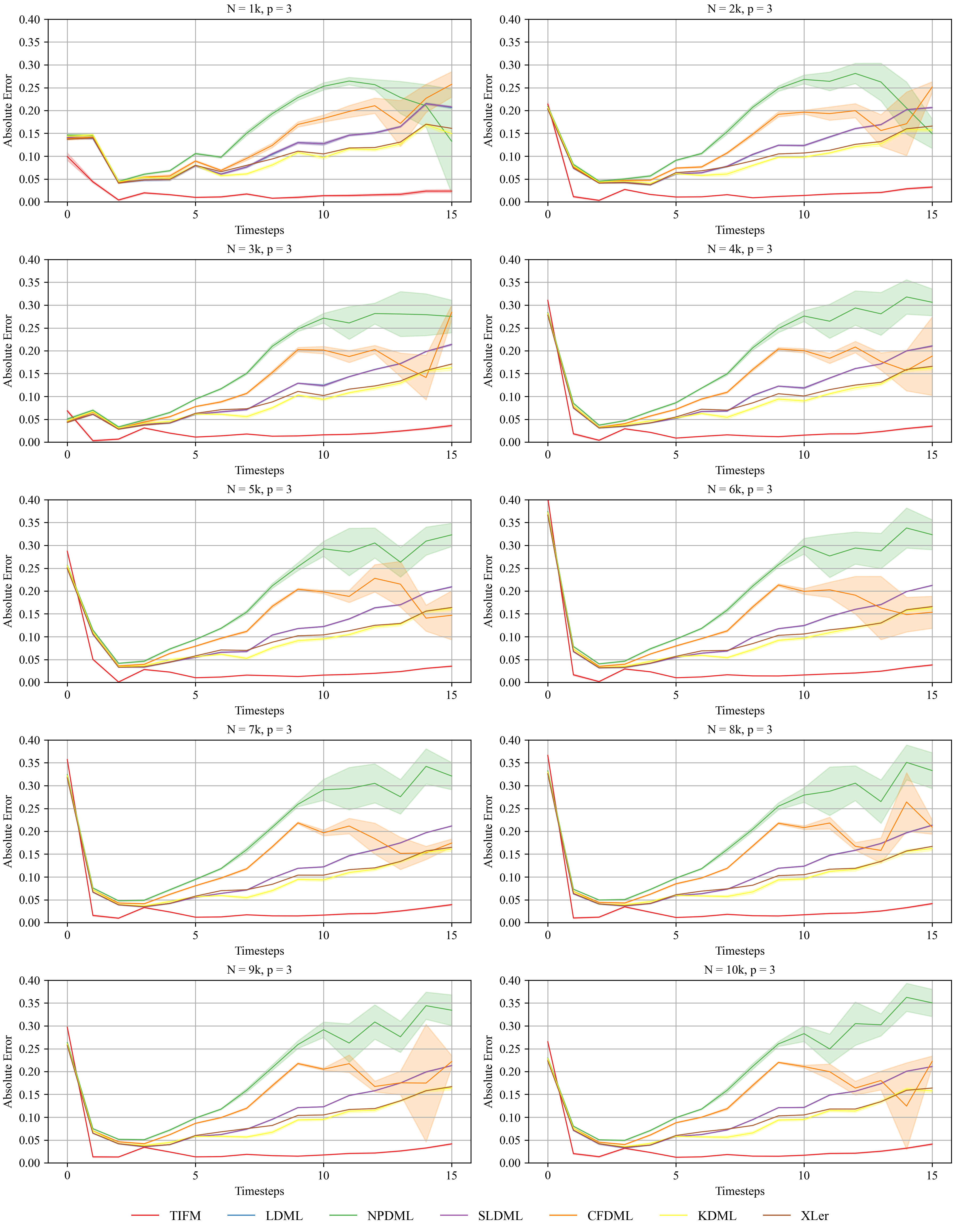}
	\caption{Results of the absolute errors across all estimators, accompanied by the mean plus standard deviation calculated over 30 synthetic datasets. $p$ = 3, where $p$ is from $p$-order autoregressive processes.} \label{pic:p3} \end{figure*}

\paragraph{Performance Evaluation}
We report the experimental results of both double-robust methods, including double robust learning (DRL) and kernel double robust learning (KDRL), along with all the methods used in the main text, across 30 synthetic datasets with 6k, as shown in Table~\ref{tab:compare}. From Table~\ref{tab:compare}, we have the same conclusions as described in the main text. Note that both DRL and KDRL exhibit a significantly large standard deviation, rendering both methods unsuitable for visualisation in our plots.

\begin{table}[t]
	\centering
	\footnotesize
	\caption{Results of the absolute errors across all estimators over 30 synthetic datasets. $p$ = 1, where $p$ is from $p$-order autoregressive processes.}
	\begin{tabular}{cccc}
		\toprule
		6k       & time-step-5        & time-step-10       & time-step-15       \\ \midrule
		Baseline & 0.283         & 0.251         & 0.231         \\ \midrule
		LDML     & 0.172±0.025 & 0.146±0.004 & 0.136±0.001 \\
		NPDML    & 0.177±0.028 & 0.183±0.012 & 0.154±0.003 \\
		SLDML    & 0.166±0.024 & 0.146±0.004 & 0.136±0.001 \\
		CFDML    & 0.135±0.029 & 0.103±0.005 & 0.108±0.005 \\
		KDML     & 0.121±0.040 & 0.130±0.007  & 0.099±0.007 \\
		Xlearner & 0.028±0.000 & 0.101±0.000 & 0.084±0.000 \\ \midrule
		DRL      & 0.256±0.295 & 0.245±0.088 & 0.225±0.038 \\
		FDRL     & 0.278±0.329 & 0.220±0.024  & 0.178±0.014 \\ \midrule
		TIFM     & 0.013±0.001 & 0.005±0.001 & 0.001±0.001 \\ \bottomrule
	\end{tabular}
	\label{tab:compare}
\end{table}

In this supplement, we provide additional comprehensive results for various sample sizes: 1k, 2k, 3k, 4k, 5k, 6k, 7k, 8k, 9k, and 10k, while varying $p=1$ and $p=3$.  The absolute errors across all estimators for datasets with  $p=1$ and $p=3$ are visualised in Fig.~\ref{pic:p1} and~\ref{pic:p3}, respectively.
From Fig.\ref{pic:p1} and~\ref{pic:p3}, we arrive at the same conclusions as described in the main text, i.e., our TIFM method exhibits the lowest absolute error compared to other methods when the timestep is larger than 5.

\end{document}